\documentclass{article} 
\pdfoutput=1
\usepackage{iclr2022_conference,times}


\usepackage{amsmath,amsfonts,bm}









\def\eqref#1{equation~\ref{#1}}









\def\1{\bm{1}}










\DeclareMathAlphabet{\mathsfit}{\encodingdefault}{\sfdefault}{m}{sl}
\SetMathAlphabet{\mathsfit}{bold}{\encodingdefault}{\sfdefault}{bx}{n}













\usepackage{amsthm,amsmath,amssymb}
\usepackage{mathrsfs}
\usepackage{hyperref}
\usepackage{url}
\usepackage[utf8]{inputenc} 
\usepackage[T1]{fontenc}    
\usepackage{booktabs}       
\usepackage{amsfonts}       
\usepackage{nicefrac}       
\usepackage{microtype}      
\usepackage{xcolor}         
\usepackage{multicol}
\usepackage{multirow}
\usepackage{graphicx}
\usepackage{threeparttable}
\usepackage{wrapfig}

\newtheorem{definition}{Definition}
\newtheorem{theorem}{Theorem}

\newtheorem{lemma}{Lemma}[section]
\newtheorem{assumption}{Assumption}

\title{Density-Based Clustering with Kernel Diffusion}

\iclrfinalcopy

\author{
Zheng Chao$^{~\ast}$ \\
School of Mathematics Sciences \\
University of Southampton \\
Southampton, England \\
\texttt{chao.zheng@southampton.ac.uk} \\
\And
Yingjie Chen \thanks{Equal contribution.} \\
School of EECS \\
Peking University \\
Beijing, China \\
\texttt{chenyingjie@pku.edu.cn} \\
\And
Chong Chen \thanks{Corresponding author.} \\
DAMO Academy, Alibaba Group \\
Beijing, China \\
\texttt{cheung.cc@alibaba-inc.com} \\
\And
Jianqiang Huang \\
DAMO Academy, Alibaba Group \\
Beijing, China \\
\texttt{jianqiang.jqh@gmail.com} \\
\And
Xian-Sheng Hua \\
DAMO Academy, Alibaba Group \\
Beijing, China \\
\texttt{huaxiansheng@gmail.com} \\
}

%

\begin{document}

\maketitle

\begin{abstract}
Finding a suitable density function is essential for density-based clustering algorithms such as DBSCAN and DPC.
A naive density corresponding to the indicator function of a unit $d$-dimensional Euclidean ball is commonly used in these algorithms. Such density  suffers from capturing local features in complex datasets. To tackle this issue, we propose a new kernel diffusion density function, which is adaptive to data of varying local distributional characteristics and smoothness. Furthermore, we develop a surrogate that can be efficiently computed in linear time and space and prove that it is asymptotically equivalent to the kernel diffusion density function. Extensive empirical experiments on benchmark and large-scale face image datasets show that the proposed approach not only achieves a significant improvement over classic density-based clustering algorithms but also outperforms the state-of-the-art face clustering methods by a large margin.

\end{abstract}

\section{Introduction}
\label{sec:intro}

Density-based clustering algorithms are now widely used in a variety of applications, ranging from high energy physics~\citep{Tramacere2012, Rovere2020}, material sciences~\citep{Marquis2019, Reza2007}, social network analysis~\citep{shi2014, Khatoon2019} to molecular biology~\citep{cao2017, Ziegler2020}. In these algorithms, data points are partitioned into clusters that are considered to be sufficiently or locally high-density areas with respect to an underlying probability density or a similar reference function. We call them density functions throughout this paper. These techniques are attractive to practitioners, due to their non-parametric feature, which leads to flexibility in discovering clusters that have arbitrary shapes, whilst classic methods such as $k$-means and $k$-medoids~\citep{ESL} can only detect convex (e.g., spherical) clusters. 
Seminal work in the context of density-based clustering includes DBSCAN~\citep{dbscan} and DPC~\citep{dpc}, among many others~\citep{Ankerst1999, Cuevas2001, Comaniciu2002, Hinneburg2007, Stuetzle2003}. 

 Most density-based clustering algorithms implicitly identify cluster centers and assign remaining points to the clusters by connecting with the higher density point nearby. To proceed with these methods it requires a density function, which is usually an estimate of the underlying true probability  density or some variants of it. For example, a popular choice is the naive density function that is carried out by simply calculating the number of data points covered in the $\varepsilon$-neighborhood of each $x$. Note that such densities are not adaptive to different distribution regions. One of the  challenging scenarios is when clusters in the data have varying local features, for example, size, height, spread, and smoothness. Therefore, the resulting density function has a tendency to flatten the peaks and valleys in the data distribution, which leads to underestimation of the number of clusters (see Figure 1). Many heuristics variations of DBSCAN and DPC have been proposed to magnify the local features, thus making the clustering task easier \citep{HDBSCAN, LC, SNN, den-ratio}. Most of these methods can be viewed as performing clustering on certain transformations of the naive density function. However, if the naive density function itself is quite problematic in the first place, these methods will become less effective.
\begin{figure}[t!]
    \caption{(a) Data generated from Gaussian mixture model with 3 components, each has differing variance and weight. (b) Naive density function in 2D (top) and 3D (bottom): only one peak can be identified. (c) Proposed kernel diffusion density function: 3 clusters can be easily discovered.}
    \centering
    \includegraphics[width=0.9\textwidth]{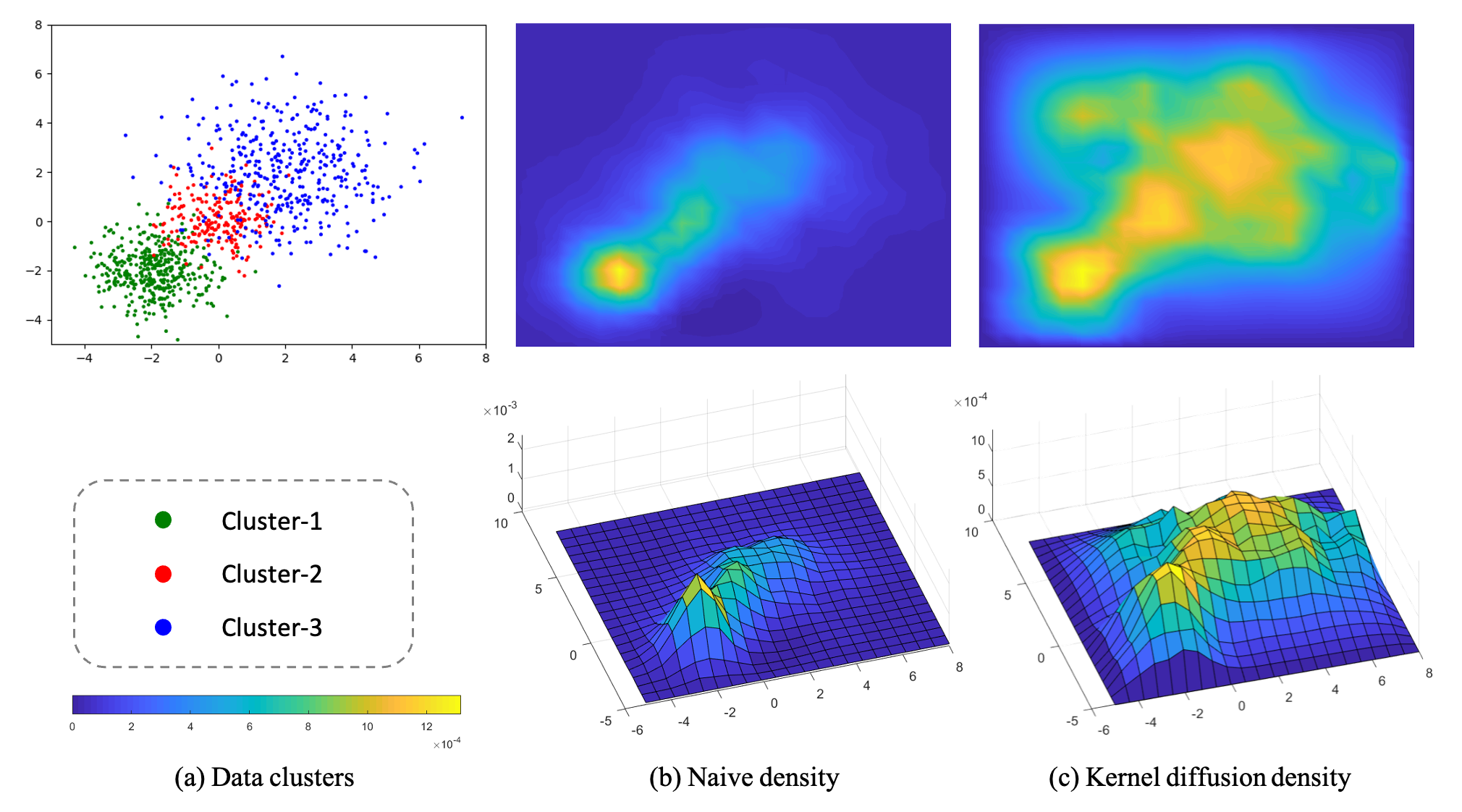}
    \label{fig:motivation}
\end{figure}

Moreover, even if we 
apply adaptive alternatives to modify the classic density functions, there are other contentious issues of the generally used linear kernel density estimator (KDE). It often suffers from severe boundary bias \citep{Marron1996} and is acknowledged as computationally expensive. 
These phenomenon prevent the classic density functions being practically useful and reliable,  
especially for large-scale and complex clustering tasks.



To overcome these problems in density-based clustering algorithms, in this paper we propose a general approach to build the so-called kernel diffusion density function to replace classic density functions. The key idea is to construct the density from a user-specified bivariate kernel  that has desired local adaptive properties. Instead of using the naive density function and its variants, we utilize the bivariate kernel to derive a transition probability. A diffusion process is induced by this transition probability,  which admits a limiting and stationary distribution. This limiting distribution serves as a plausible  
density function for clustering with reduced error.

Under this framework, we provide examples of symmetric and asymmetric bivariate kernels to construct the kernel diffusion density function, which can tackle clustering complex and locally varying data. We apply the resulting adapted DBSCAN and DPC algorithms to widely different empirical datasets and show significant improvement in each of these analyses. The main contributions of this paper are summarized below:
\begin{itemize}
    \item We introduce new bivariate kernel functions and construct the associated kernel diffusion processes. Based on the diffusion process, we propose a kernel diffusion density function to adapt density-based clustering algorithms such as DBSACN and DPC, which attains accuracy in the presence of varying local features.
    \item We derive a computationally much more efficient surrogate, and show analytically it is asymptotic equivalent to the proposed kernel diffusion density function.
    \item By extensive experiments, we demonstrate the superiority of kernel diffusion density function over naive density function and its variants when applying to DBSCAN and DPC, and show it outperforms state-of-the-art GCN-based methods on face clustering tasks.
\end{itemize}

\section{Related Work}

\paragraph{Density-Based Clustering} There is vast literature on adapting density-based clustering algorithms to tackle large variations in different clusters in the data.
DPC itself is such a refinement of DBSCAN, as it determines cluster centers not only by highest density values but also by taking into account their distances from each other, thus has a generally better performance in complex clustering tasks. Other attempts include rescaling the data to have relative reference measures instead of KDE~\citep{den-ratio, LC}, and using the number of shared-nearest-neighbors between two points to replace the geometric distance~\citep{SNN}.


\paragraph{Diffusion Maps}
The technique of diffusion maps~\citep{Coifman2005,Coifman2006} gives a multi-scale organization of the data according to their underlying geometric structure. 
It uses a local similarity measure to create a diffusion process on the data which integrates local geometry at different scales along time $t$. Generally speaking, the diffusion will segment the data into several smaller clusters in small $t$ and group data into one cluster for large $t$. Applying eigenfunctions at a carefully selected time $t$ leads to good macroscopic representations of the data, which is useful in dimension reduction and spectral clustering~\citep{Nadler2005}. 

\paragraph{Face Clustering}
Face clustering has been extensively studied as an important application in machine learning. Traditional algorithms include $k$-means, hierarchical clustering~\citep{hac} and ARO~\citep{aro}. Many recent works take advantage of supervised information and GCN models, achieving impressive improvement comparing to traditional algorithms. To name a few, CDP~\citep{cdp} proposes to aggregate the features extracted by different models; L-GCN~\citep{lgcn} predicts the linkage in an instance pivot subgraph; LTC~\citep{ltc} generates a series of subgraphs as proposals and detects face clusters thereon; and GCN(V+E)~\citep{gcn-ve} learns both the confidence and connectivity by GCN. In this paper we demonstrate that the proposed density-based clustering algorithm with kernel diffusion, as a general clustering approach, even outperforms theses state-of-the-art methods that are especially designed for face clustering.

\section{Preliminaries}
\label{sec:preliminaries}
\subsection{Notations}

Let the dataset $D=\{x_1,\dots, x_n\} \subset \mathbb{R}^d$ be $n$ i.i.d samples drawn from a distribution measure $F$ with density $f$ on $\mathbb{R}^d$. Let $F_n$ denote the corresponding empirical distribution measured with respect to $D$, i.e., $F_n(A) = \frac{1}{n}\sum_{i=1}^n \textbf{1}_{A}(x_i)$, where $\textbf{1}_A(\cdot)$ denotes the indicator function of set $A$. We write $||u||$ as the Euclidean norm of vector $u$.
Let $B(x, \varepsilon)$ and $V_d$ denote the $d$-dimensional $\varepsilon$-ball centered at $x$ and the volume of the unit ball $B(0, 1)$, respectively.  
Let $N_k(x)$ denote the set of $k$-nearest neighbors of point $x$ within the dataset $D$.

\subsection{Density function}
Density-based algorithms perform  clustering by specifying and segmenting high-value areas in a density function denoted by $\rho$. Usually, we calculate each of $\rho(x_i)$, and then  identify cluster centers with (locally) highest values. Many popular algorithms such as DBSCAN and DPC employ the following naive density function:
\begin{equation}
\label{equ:density-naive}
\rho_{\text{naive}}(x) = \dfrac{1}{n \varepsilon^d}\sum_{y\in D} \dfrac{\textbf{1}_{B(x, \varepsilon) }(y)}{V_d}.
\end{equation}
The naive density function $\rho_{\text{naive}}$ is actually an empirical estimation of $f$ for carefully chosen $\varepsilon$. It is easy to observe, for clustering purpose we only care about $\rho_{\text{naive}}(x)$ up to a normalising constant, which makes it simply equivalent to counting the total number of data points in the $\varepsilon$-ball around $x$. 

In practice, the data distribution may be very complex and contains varying local features that are difficult to be detected. The naive density in (\ref{equ:density-naive}) with the same radius $\varepsilon$ for all $x$ usually suffers from unsatisfactory empirical performance, for example, failing to identify small clusters with fewer data points. One possible way to alleviate this problem is through a transformation into the following local contrast (LC) function \citep{LC}:
\begin{align}
\label{equ:lc}
    \rho_{\text{LC}}(x) = 
     & \dfrac{1}{n}\sum_{y\in N_{k}(x)}\textbf{1}_{\rho_{\text{naive}}(x) > \rho_{\text{naive}}(y)}.
\end{align}
In this way, $\rho_{\text{LC}}$ compares the density of each data point with its $k$-nearest neighbors. To see the benefit of LC, let us consider $x$ to be a cluster center. After local contrasting, $\rho_{\text{LC}}(x)$ is likely to reach the value of $k$ regardless of the size of this cluster. 

However density functions like $\rho_{\text{LC}}$ still highly depend on the underpinning performance of $\rho_{\text{naive}}$. This restricts their applications in clustering data with challenging local features.

\section{Methodology}

In this section, we present a new type of density-based clustering algorithm, based on the notion of kernel diffusion density function. Towards this end, we will introduce a kernel diffusion density function, which takes account of local adaptability and is well-suited for clustering purpose. We  provide details on how to derive this density function from a diffusion process induced by bivariate kernels. We also provide a surrogate density function that is computationally more efficient.


\subsection{Diffusion process and Kernel Diffusion Density}
 \label{sec:diffusion}
 
Considering a bivariate kernel function $k: D\times D \rightarrow \mathbb{R}^+$, such that:
\begin{itemize}
 \item $k(x, y)$ is positive semi-definite, i.e., $k(x, y)\ge 0$.
 \item $k(x, y)$ is $F_n$-integrable with respect to both $x$ and $y$. 
\end{itemize}
We define $d(x)=n\int_{D}k(x,y)dF_n(y)$ as a local measure of the volume at $x$ and define
\begin{equation} \label{eq:transition}
p(x,y)=\dfrac{k(x,y)}{d(x)}.
\end{equation}
It is easy to see that $p(x,y)$ satisfies the conservation property $n\int_{\mathcal{D}}p(x,y)dF_n(y)=1$.  
As a result, $p(x,y)$ can be viewed as a probability for a random walk on the dataset from point $x$ to point $y$, which induces a Markov chain on $D$ with $n\times n$ transition matrix $P=[p(x,y)]$.  This technique is standard in various applications, known as the normalized graph Laplacian construction. For example, we can view $D$ as a graph, $L=I-P$ as the normalized graph Laplacian, and $d(x)$ as a normalization factor.

For $t\ge 0$, the probability of transiting from $x$ to $y$ in $t$ time steps is given by $P^t$, the $t$-th power of $P$.  Running the Markov chain forward in time, we observe the dataset at different scales, which is the diffusion process $X_t$ on $D$. Let $\rho(x,t)$: $D\times\mathbb{R}^{+} \rightarrow \mathbb{R}^{+}$ be the associated probability density, which is governed by the following second-order differential equation with initial conditions:
\begin{equation}
\label{eq:diffusion}
      \begin{cases}
    &\frac{\partial}{\partial t} \rho(x,t) = -L \rho(x,t), \\
    & \rho(x,0) =  \phi_0(x),
     \end{cases}
\end{equation} 
where $\phi_0(x)$ is a probability density at time $t = 0$. In practice we can use any valid choice of $\phi_0(x)$, e.g. the uniform density.


To give an explicit example of the diffusion process, consider a sub-class of $k$, i.e., isotropic kernels, where  $k(x,y)=\mathcal{K}(||x-y||^2/h)$ for some function $\mathcal{K}: \mathbb{R}\rightarrow \mathbb{R}^+$. Here we can dual interpret $h$  as a scale parameter to infer local information and as a  time step $h=\Delta t$ at which the random walk jumps. Then we can define the forward Chapman-Kolmogorov operator $T_F$ as
 $$
 T_F=n\int_{D} p(x,y) \phi_0(y)dF_n(y).
 $$
Note that  $T_F$ is the data distribution at time $t = h$, thus can be viewed as continuous analogues of the left multiplication by the transition matrix $P$. Letting $h \rightarrow 0$, the random walk converges to a continuous diffusion process with probability density evolves continuously in $t$. In this case, we can explicitly write the second-order differential equation in (\ref{eq:diffusion}) as:
\begin{equation}
\label{equ:diffusion1}
    \frac{\partial}{\partial t} \rho(x,t) =\lim_{h\rightarrow 0}\dfrac{\hat \rho(x,t+h)- \rho(x,t)}{h}= \lim_{h\rightarrow 0} \dfrac{T_F-I}{h} \rho(x,t), \\
\end{equation}
where $L_h=\lim_{h\rightarrow 0} {(T_F-I)}/{h}$ is the conventional infinitesimal generator of the process.

Now we are ready to introduce our kernel diffusion density function.
\begin{definition}(Kernel diffusion density function)
Suppose the Markov chain induced by $P$ is ergodic, we define the kernel diffusion density function as the limiting probability density of the diffusion process $X_t$, i.e.,
\begin{equation}
\label{equ:density-dfiiusion} 
    \rho_{\rm{KD}}(x) = \lim_{t \to \infty}\rho(x,t).
\end{equation}
\end{definition}

We provide some intuitions that why $\rho_{\text{KD}}$ serves as a valid density function for clustering. With increased values of $t$, the diffusion process $X_t$ reveals the geometric structure (such as high-density regions) of the data distribution $F$ at increasing scales. To see this, note that the transition probability $P$ reflects connectivity between data points. We can interpret a cluster as a region in which the probability of staying in this region is high during a transition. The probability of following a path along an underlying geometric structure increases with $t$, as the involved data points are dense and highly connected. The paths, therefore, form along with short and high probability jumps. Whilst paths that do not follow this structure form long and low probability jumps, which lowers the path’s overall probability. 

Meanwhile, to prevent the diffusion process grouping of all the data into one large cluster as $t\rightarrow \infty$, we can use certain sophisticated forms of $k(x,y)$ that focus on local adaptivity. This will help slow down the diffusion and lead the process gradually towards the correct geometry structure at the right scale. Thus they achieve a balance between magnifying geometry structures and identifying the signal of local features. 


\subsection{Locally Adaptive Kernels}
\label{sec:kernels}
To address the local adaptability in kernel diffusion density function, we propose the following two bivariate kernels. Both of them are very simple variations of the most commonly used classic kernels.
\paragraph{Symmetric-Gaussian kernel:} 
\begin{equation}
\label{equ:sdde}
k(x, y) = 
\exp\bigg(-\frac{\|x-y\|^{2}}{h}\bigg) \mathbf{1}_{B(x, \varepsilon)}(y). 
\end{equation}
Here $h$ and $\epsilon$ are both hyper-parameters. We call this kernel symmetric since $k(x, y)=k(y, x)$ .

\paragraph{Asymmetric-Gaussian kernel:}
\begin{equation}
\label{equ:adde}
k(x, y) = 
\exp\bigg(-\frac{\|x-y\|^{2}}{h}\bigg) \mathbf{1}_{N_k(x)}(y). 
\end{equation}
Here $h$ and $k$ are hyper-parameters. Note that in this case $k(x,y)$ is asymmetric as $y \in N_{k}(x)$ does not imply $x \in N_{k}(y)$.

Bivariate kernels defined in (\ref{equ:sdde}) and (\ref{equ:adde}) are just combinations of classic Gaussian kernel and $\varepsilon$-neighbourhood or $k$-nearest neighbours kernels, respectively. With these simple combinations, we truncate Gaussian kernel at local areas, and the contribution of each point $y$ to the construction of the density function $\rho_{\text{KD}}(x)$ depends not only on the distance $||y-x||$ but also on the local geometry structure around $x$. Hence, the new kernels are adaptive at different $x$, which is expected to lead to better clustering performance against local features. We remark that the Asymmetric-Gaussian kernel  takes into account a varying neighborhood around each $x$, thus is more adaptive comparing to the Symmetric-Gaussian kernel. 

Although here we only provide two examples of locally adaptive kernels,  other options can be easily created in a similar spirit under this framework, e.g., changing the Gaussian kernels to other kernels or changing the $\varepsilon$-neighbourhood ($k$-nearest neighbours) kernels to other locally truncated functions. Once $k(x,y)$ is determined, we can derive the corresponding density function $\rho_{\text{KD}}$. Next, we just need to simply apply any density clustering procedure like  DPC or DBSCAN based on $\rho_{\text{KD}}$ instead of the naive density function $\rho_{\text{naive}}$. 

  In Section \ref{sec:experiment}, we assess the empirical performance of the proposed kernel diffusion density function with the above two locally adaptive kernels. They outperform existing density-based algorithms and other state-of-the-art methods.


\subsection{Fast Kernel Diffusion Density}
\label{sec:fkd}

The kernel diffusion density function $\rho_{\text{KD}}$   can be calculated as the stationary distribution of a Markov chain induced by the  transition matrix $P$. Numerically, we can solve it by iteratively right multiplying $P$ with $\rho(x,t)$ until convergence, or applying a QR decomposition on $P$.
These methods are expensive in terms of computational cost, especially when the sample size $n$ is large. 

To tackle this problem, we propose the following surrogate of $\rho_{\text{KD}}(x)$ which is computationally more efficient.  
\begin{definition}(Fast kernel diffusion density function) Let $p(y,x)$ be the transition probability from point $y$ to point $x$, as defined in equation (\ref{eq:transition}). We define the  fast kernel diffusion density function as
\begin{equation}
\label{eq:den.ref}
\rho_{\rm{FKD}}(x)=\int_D p(y, x) dF_n(y),
\end{equation}
\end{definition}
It is straightforward that $\rho_{\text{FKD}}$ can be obtained in linear time and memory space, as we only need to compute the column averages of matrix $P$.

Here we show that $\rho_{\text{FKD}}$ is not only computationally efficient but also suitable for detecting local features. This is illustrated through the following Theorem \ref{theorem:3}. Consider a special case that $k(x,y)=\mathbf{1}_{B(x, \varepsilon)}(y)$. Then it is easy to verify that
$$\rho_{\text{FKD}}(x) = \frac{1}{C_{d}}\sum_{y\in B(x, \varepsilon)}\frac{1}{\rho_{\text{naive}}(y)},$$
where $C_{d} = n\varepsilon^{d}V_{d}$ is a normalising constant. In this way, we build a connection between $\rho_{\text{FKD}}$ and the naive density function $\rho_{\text{naive}}$ in this special example.
\begin{theorem}
\label{theorem:3}
Consider the above special case that $k(x,y)=\mathbf{1}_{B(x, \varepsilon)}(y)$. In addition, assume the dataset $D = \{x_{1},...,x_{n}\}$ can be split into $m$ disjoint clusters: i.e., $X = D_{1}\bigcup...\bigcup D_{m}$ and for each $x\in D$, $B(x, \varepsilon)$ only contain data points that belong to the same cluster as $x$. Denote $\bar{\rho}_{j} = \frac{1}{|D_{j}|}\sum_{x\in D_{j}}\rho_{\text{FKD}}(x)$ as the average density in cluster $j$.  We have
$$\bar{\rho}_{1} =\dots= \bar{\rho}_{m} = 1.$$
\end{theorem}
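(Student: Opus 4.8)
The plan is to reduce the statement to a finite double-counting (Fubini-type) argument. First I would record the elementary identities behind the special-case formula stated immediately before the theorem: for $k(x,y)=\mathbf{1}_{B(x,\varepsilon)}(y)$ one has $d(y)=n\int_D k(y,z)\,dF_n(z)=|B(y,\varepsilon)\cap D|$ and $\rho_{\text{naive}}(y)=d(y)/C_d$ with $C_d=n\varepsilon^{d}V_d$; combining these with the symmetry $\mathbf{1}_{B(y,\varepsilon)}(x)=\mathbf{1}_{B(x,\varepsilon)}(y)$ used in the definition $\rho_{\text{FKD}}(x)=\int_D p(y,x)\,dF_n(y)$ gives
\[
\rho_{\text{FKD}}(x)=\frac{1}{C_d}\sum_{y\in B(x,\varepsilon)\cap D}\frac{1}{\rho_{\text{naive}}(y)}=\sum_{y\in B(x,\varepsilon)\cap D}\frac{1}{|B(y,\varepsilon)\cap D|}.
\]
So $C_d$ cancels and only the combinatorial quantities $|B(y,\varepsilon)\cap D|$ remain.

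Next I would fix a cluster $D_j$ and sum the above over $x\in D_j$, obtaining $\sum_{x\in D_j}\rho_{\text{FKD}}(x)=\sum_{x\in D_j}\sum_{y\in B(x,\varepsilon)\cap D}1/|B(y,\varepsilon)\cap D|$. The clustering hypothesis says that for each $x\in D_j$ the set $B(x,\varepsilon)\cap D$ is contained in $D_j$, so the inner index $y$ never leaves $D_j$; this lets me replace $B(x,\varepsilon)\cap D$ by $B(x,\varepsilon)\cap D_j$ and then interchange the two finite sums, giving
\[
\sum_{x\in D_j}\rho_{\text{FKD}}(x)=\sum_{y\in D_j}\frac{\bigl|\{x\in D_j:\|x-y\|\le\varepsilon\}\bigr|}{|B(y,\varepsilon)\cap D|}=\sum_{y\in D_j}\frac{|B(y,\varepsilon)\cap D_j|}{|B(y,\varepsilon)\cap D|}.
\]
Then I would invoke the hypothesis a second time, now at the point $y\in D_j$: since $B(y,\varepsilon)\cap D\subseteq D_j$, we have $B(y,\varepsilon)\cap D=B(y,\varepsilon)\cap D_j$, so every summand equals $1$ and the total is $|D_j|$. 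Dividing by $|D_j|$ yields $\bar{\rho}_j=1$, and as $j\in\{1,\dots,m\}$ was arbitrary we conclude $\bar{\rho}_1=\dots=\bar{\rho}_m=1$.

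I do not expect a genuine obstacle. The only points that need care are applying the ``self-contained $\varepsilon$-ball'' hypothesis in the right two places — once to justify restricting the inner sum to $D_j$ and swapping the order of summation, and once to collapse the ratio $|B(y,\varepsilon)\cap D_j|/|B(y,\varepsilon)\cap D|$ to $1$ — and observing that each denominator is nonzero, which is immediate since $y\in B(y,\varepsilon)\cap D$. Keeping the normalising constant $C_d$ straight is the only bookkeeping; it cancels cleanly, which is exactly why the common average comes out as the clean value $1$.
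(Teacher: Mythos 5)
Your proof is correct and is essentially the paper's argument in concrete form: the paper interchanges the double sum over $D_j\times D_j$ using that cross-cluster entries of $P$ vanish and that each row of $P$ sums to one, which is exactly your step of swapping the two finite sums and collapsing the ratio $|B(y,\varepsilon)\cap D_j|/|B(y,\varepsilon)\cap D|$ to $1$. The only difference is presentational — you unpack the indicator-kernel formula into explicit ball counts, while the paper argues abstractly with the transition matrix's conservation property.
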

Theorem \ref{theorem:3} demonstrates that the averaged $\rho_{\text{FKD}}$ in each cluster are  the same regardless of cluster sizes and other local features. This shows that $\rho_{\text{FKD}}$ elevates the density of small clusters, which is essential for finding the density peaks of small clusters.

Previously we claim that $\rho_{\text{FKD}}$ is a surrogate of the kernel diffusion density $\rho_{\text{KD}}$.   Next, we want to reveal the relationship between these two density functions from an asymptotic viewpoint. To proceed, we will need the following assumption.
\begin{assumption}
 \label{assp2}
There exists some positive constant $c<1$ such that $\rho_{\text{FKD}}(x)\le c$ uniformly holds for every $x\in D$.
 \end{assumption}
This is a very mild assumption, since it always holds that $\rho_{\text{FKD}}(x)<1$, and the average of $\rho_{\text{FKD}}(x)$ over the dataset is $\int_D \rho_{\text{FKD}}(x) dF_n(x)=1/n$, which vanishes as $n\rightarrow\infty$. Now we are ready to present the following theorem that characterise the uniform closeness between $\rho_{\text{FKD}}$ and $\rho_{\text{KD}}$. 

\begin{theorem}
\label{theorem:2}
Suppose that Assumption \ref{assp2} holds and the  Markov chain induced by the kernel $k(x,y)$ is ergodic. We have 
$$\lim_{n\to \infty}\left|\frac{\rho_{\text{KD}}(x)}{\rho_{\text{FKD}}(x)} - 1\right| = 0, \quad \forall x\in D$$
\end{theorem}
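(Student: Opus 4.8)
The plan is to pin down what $\rho_{\text{KD}}$ actually is, rewrite $\rho_{\text{FKD}}$ in the same language, and then estimate their difference. First I would identify $\rho_{\text{KD}}$ with the stationary distribution of the chain: the steady state of the diffusion equation $\partial_t\rho=-L\rho$ is characterised by $L\rho=(I-P)\rho=0$, i.e.\ by $\rho$ being $P$-invariant, and since the chain is ergodic there is a unique $P$-invariant law $\pi$ (with $\pi P=\pi$, $\sum_x\pi(x)=1$ and $P^t\to\mathbf 1\pi$, the matrix all of whose rows equal $\pi$), which attracts every admissible initial density $\phi_0$. Hence $\rho_{\text{KD}}=\pi$, and the theorem reduces to comparing $\pi(x)$ with $\rho_{\text{FKD}}(x)$ as $n\to\infty$.

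Next, writing $u(x)=1/n$ for the uniform law on $D$, I would observe that $\rho_{\text{FKD}}(x)=\int_D p(y,x)\,dF_n(y)=(uP)(x)$: the fast density is exactly the law obtained by running the \emph{same} diffusion for one step from the uniform start. Thus $\rho_{\text{FKD}}=uP$ while $\rho_{\text{KD}}=\lim_{t\to\infty}uP^t$ — the first and the terminal iterate of one and the same power iteration. Using $\pi=\pi P$ and $(u-\pi)\mathbf 1=0$, the gap factors cleanly as $\rho_{\text{FKD}}-\rho_{\text{KD}}=(u-\pi)P=(u-\pi)R$, where $R:=P-\mathbf 1\pi$ is the ``non-Perron'' part of $P$. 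Dividing through by $\rho_{\text{FKD}}(x)$, which is strictly positive by irreducibility and which stays bounded away from $1$ by Assumption~\ref{assp2}, reduces the claim to the estimate $\sup_{x\in D}\bigl|(u-\pi)R\bigr|(x)/\rho_{\text{FKD}}(x)\to 0$.

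The last step is where the real difficulty lies, and a naive spectral bound is not enough: $\|(u-\pi)R\|\le\lambda_2\|u-\pi\|$ with $\lambda_2$ the second-largest eigenvalue modulus of $P$, but for a data-driven diffusion the spectral gap typically closes as $n\to\infty$ (so $\lambda_2\to 1$), while $\|u-\pi\|$ is in general of order $1$. What rescues the statement is the local adaptivity built into $k$: in the consistency regime in which the kernel's scale parameter shrinks as $n\to\infty$, one shows that the local volume $d(x)=n\int_D k(x,y)\,dF_n(y)$ is, uniformly in $x$, asymptotically proportional to the local kernel mass around $x$, which forces $\sum_y p(y,x)\to 1$ uniformly — that is, $P$ becomes asymptotically doubly stochastic — so that $\pi\to u$ and $\rho_{\text{FKD}}=uP\to u$ simultaneously, the numerator $(u-\pi)R$ then vanishing at a strictly faster order than the denominator $\rho_{\text{FKD}}(x)$; Assumption~\ref{assp2} enters at the end to upgrade convergence of the difference to uniform convergence of the ratio. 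I expect this uniform ``asymptotically doubly stochastic'' estimate to be the main obstacle: bare ergodicity gives only a qualitative, $n$-dependent convergence $uP^t\to\pi$ with no usable rate, so the needed uniformity has to be extracted from the structure of the truncated kernels (compact local support plus shrinking bandwidth forcing concentration of the degree function) rather than from generic mixing-time arguments. If one prefers not to constrain the kernel scaling, a viable fallback is to prove $\|u-\pi\|_1\to 0$ directly and then control the ratio via a separate uniform lower bound on $\rho_{\text{FKD}}$.
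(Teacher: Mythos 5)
Your setup is correct and matches the paper's first steps: $\rho_{\text{KD}}$ is the stationary law $\pi$ of the ergodic chain, and $\rho_{\text{FKD}}=uP$ is one step of that chain from the uniform start, i.e.\ the vector of column averages of $P$. Your diagnosis that a bare spectral bound on $(u-\pi)R$ cannot close the argument is also fair. But the proposal then stops exactly where the theorem has to be proved. The ``asymptotically doubly stochastic'' estimate you lean on is (i) never established, and (ii) imported from a consistency regime (shrinking bandwidth, concentration of the degree function) that is not among the hypotheses of Theorem~\ref{theorem:2}; nothing in the assumptions forces $\pi\to u$, and the theorem does not need it --- it only claims $\pi(x)/\rho_{\text{FKD}}(x)\to 1$, which can hold while both stay far from uniform. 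Most tellingly, Assumption~\ref{assp2} is the engine of the actual proof, whereas in your argument it appears only as an afterthought for passing from the difference to the ratio.

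The paper's route is purely algebraic and bypasses mixing-time considerations entirely. Write $g^{T}=e^{T}P$ for the vector of column sums, so $\rho_{\text{FKD}}=g/n$. Lemma~\ref{lemma:1} (a generalized-fundamental-matrix identity due to Hunter) gives the exact formula $\pi^{T}=g^{T}(I-P+eg^{T})^{-1}$. Setting $M=I+eg^{T}$, with $M^{-1}=I-eg^{T}/(n+1)$, one expands
$$(I-P+eg^{T})^{-1}=M^{-1}\sum_{i\ge 0}\bigl(PM^{-1}\bigr)^{i},$$
and Assumption~\ref{assp2} ($g_{i}/n=\rho_{\text{FKD}}(x_{i})\le c<1$) yields $g^{T}p_{j}\le nc\,e^{T}p_{j}=nc\,g_{j}$, i.e.\ each application of $PM^{-1}$ contracts the relevant components by a factor at most $c$. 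Summing the geometric series gives the quantitative bound
$$\bigl|\rho_{\text{KD}}(x)-\rho_{\text{FKD}}(x)\bigr|\le \rho_{\text{FKD}}(x)\left|\frac{1}{(n+1)(1-c)}-\frac{1}{n}\right|=O(1/n)\,\rho_{\text{FKD}}(x),$$
from which the ratio statement is immediate. To repair your proof you would either have to add hypotheses and actually prove the double-stochasticity claim (thereby proving a different theorem), or find a way to turn Assumption~\ref{assp2} into a contraction --- which is precisely what the rank-one perturbation $eg^{T}$ accomplishes.
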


Theorem \ref{theorem:2} implies that  $\rho_{\text{FKD}}$ uniformly approximates $\rho_{\text{KD}}$  when $n$ is sufficiently large. So it is safe for us to use it to replace  $\rho_{\text{KD}}$ in practice. This result is also verified by our numerical experiments in Section \ref{sec:experiment}.

\section{Experiments}
\label{sec:experiment}
In this section, we empirically evaluate the proposed kernel diffusion density functions against $\rho_{\text{naive}}$ and $\rho_{\text{LC}}$ in density-based clustering algorithms, and also compare them with other state-of-the-art methods. We denote $\rho_{\text{KD}}^{\text{sym}}$ and $\rho_{\text{FKD}}^{\text{sym}}$ as the kernel diffusion density functions and its fast surrogate, with symmetric-Gaussian kernel, respectively. Similarly, we denote $\rho_{\text{KD}}^{\text{asym}}$ and $\rho_{\text{FKD}}^{\text{asym}}$ as the proposed two density functions with asymmetric-Gaussian kernel, respectively. We examine their performance on a wide range of datasets. 
The clustering results are measured in Pairwise F-score~\citep{pairwise} and BCubed F-score~\citep{ bcubed}.

The parameters $\varepsilon$ (radius of the ball, used in $\rho_{\text{naive}}$, $\rho_{\text{LC}}$, $\rho_{\text{KD}}^{\text{sym}}$ and $\rho_{\text{FKD}}^{\text{sym}}$), $k$ (number of nearest neighbors, used in $\rho_{\text{LC}}$, $\rho_{\text{KD}}^{\text{asym}}$ and $\rho_{\text{FKD}}^{\text{asym}}$) are tuned by searching within a suitable range in the parameter space, and $h$ (bandwidth of Gaussian kernels, used in $\rho_{\text{KD}}^{\text{sym}}$, $\rho_{\text{FKD}}^{\text{sym}}$, $\rho_{\text{KD}}^{\text{asym}}$ and $\rho_{\text{FKD}}^{\text{asym}}$) is fixed to be $0.5$.

\begin{table}
 \caption{Clustering performance on benchmark datasets with different density functions applied to DPC. Pairwise F-score ($F_P$) and BCube F-score ($F_B$) under optimal parameter tuning are given. The best and second-bset results in each dataset are bolded and underlined, respectively.}
    \centering
    \setlength{\tabcolsep}{1.6mm}{
    \begin{tabular}{l|cccccc|cccccc}
    \toprule
    \multirow{2}{*}{Dataset} & \multicolumn{6}{c|}{$F_P$} & \multicolumn{6}{c}{$F_B$} \\
    \cmidrule{2-13}
    & $\rho_{\text{naive}}$ & $\rho_{\text{LC}}$ & \multicolumn{1}{|c}{$\rho_{\text{KD}}^{\text{sym}}$} & $\rho_{\text{KD}}^{\text{asym}}$ & $\rho_{\text{FKD}}^{\text{sym}}$ & $\rho_{\text{FKD}}^{\text{asym}}$ 
    & $\rho_{\text{naive}}$ & $\rho_{\text{LC}}$ & \multicolumn{1}{|c}{$\rho_{\text{KD}}^{\text{sym}}$} & $\rho_{\text{KD}}^{\text{asym}}$ & $\rho_{\text{FKD}}^{\text{sym}}$ & $\rho_{\text{FKD}}^{\text{asym}}$ \\
    \midrule
    Banknote & 
    54.3 & 31.6 & \multicolumn{1}{|c}{67.2} & \underline{83.9} & 67.2 & \textbf{93.6} & 
    57.7 & 31.8 & \multicolumn{1}{|c}{67.2} & \underline{85.1} & 67.2 & \textbf{93.6} \\
    Breast-d & 
    55.9 & 51.8 & \multicolumn{1}{|c}{\textbf{78.0}} & 69.1 & 67.4 & \underline{72.6} & 
    59.0 & 58.7 & \multicolumn{1}{|c}{\textbf{76.0}} & 69.7 & 69.4 & \underline{72.2} \\
    Breast-o & 
    57.6 & 70.7 & \multicolumn{1}{|c}{\underline{82.8}} & \textbf{92.9} & 82.7 & \textbf{92.9} & 
    52.2 & 74.1 & \multicolumn{1}{|c}{\underline{75.9}} & \textbf{92.2} & 75.8 & \textbf{92.2} \\
    Control & 
    48.6 & 49.3 & \multicolumn{1}{|c}{49.0} & \underline{63.9} & 52.5 & \textbf{64.5} & 
    51.6 & 52.4 & \multicolumn{1}{|c}{52.0} & \underline{70.8} & 55.1 & \textbf{71.8} \\
    Glass & 
    36.9 & 39.0 & \multicolumn{1}{|c}{46.3} & \textbf{48.1} & 44.8 & \underline{47.8} & 
    42.7 & 45.7 & \multicolumn{1}{|c}{55.1} & \underline{56.9} & 53.5 & \textbf{57.1} \\
    Haberman & 
    66.9 & 64.1 & \multicolumn{1}{|c}{74.5} & \underline{75.7} & \textbf{75.8} & \underline{75.7} & 
    66.9 & 63.3 & \multicolumn{1}{|c}{74.5} & \underline{75.8} & \textbf{75.9} & \underline{75.8} \\
    Ionosphere & 
    27.4 & 28.3 & \multicolumn{1}{|c}{46.9} & \textbf{54.9} & 46.0 & \underline{53.9} & 
    25.0 & 25.8 & \multicolumn{1}{|c}{42.6} & \textbf{52.5} & 41.7 & \underline{49.2} \\
    Iris & 
    54.3 & 53.8 & \multicolumn{1}{|c}{65.8} & \textbf{74.6} & \underline{69.2} & \textbf{74.6} & 
    61.6 & 62.3 & \multicolumn{1}{|c}{72.7} & \textbf{80.0} & \underline{74.0} & \textbf{80.0} \\
    Libras & 
    20.0 & 22.9 & \multicolumn{1}{|c}{29.3} & \textbf{31.5} & 26.0 & \underline{31.0} & 
    26.8 & 29.1 & \multicolumn{1}{|c}{37.8} & \textbf{41.8} & 33.3 & \underline{39.8} \\
    Pageblocks & 
    \underline{92.9} & \textbf{93.0} & \multicolumn{1}{|c}{90.5} & 90.2 & 89.7 & 90.2 & 
    \underline{89.9} & \textbf{90.0} & \multicolumn{1}{|c}{89.8} & 89.7 & 89.6 & 89.7 \\
    Seeds & 
    54.3 & 54.9 & \multicolumn{1}{|c}{68.0} & \textbf{78.0} & \underline{69.5} & \textbf{78.0} & 
    54.3 & 55.4 & \multicolumn{1}{|c}{72.4} & \textbf{78.7} & \underline{72.9} & \textbf{78.7} \\ 
    Segment & 
    48.4 & 48.0 & \multicolumn{1}{|c}{\underline{57.1}} & \textbf{58.0} & 41.4 & 56.1 & 
    64.2 & 63.8 & \multicolumn{1}{|c}{67.1} & \textbf{69.2} & 60.6 & \underline{68.2} \\
    Wine & 
    45.2 & 61.1 & \multicolumn{1}{|c}{56.6} & \textbf{68.0} & 60.0 & \underline{65.3} & 
    46.0 & 61.9 & \multicolumn{1}{|c}{61.5} & \textbf{74.7} & 66.3 & \underline{71.4} \\
    \bottomrule
    \end{tabular}}
    \label{tab:benchmark}
\end{table}

\begin{figure}[!b]
\caption{Precision-Recall curves of different approaches applied to DPC on MS1M dateset, using (a) Pairwise metric, and (b) BCubed metric .}
  \centering
  \includegraphics[width=1.0\textwidth]{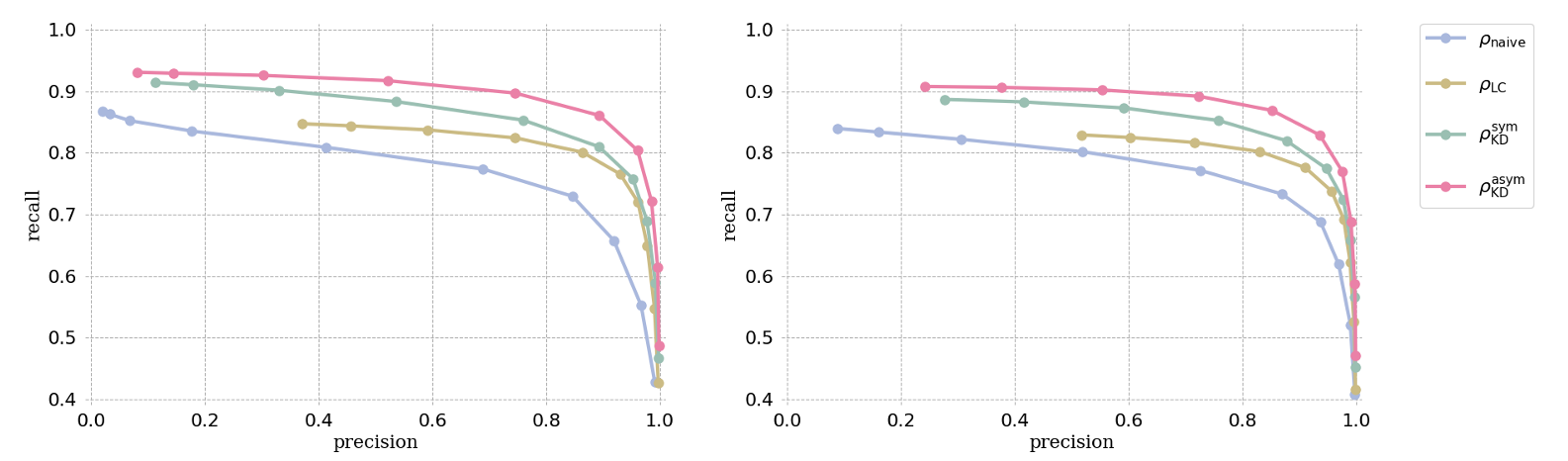}
  \label{fig2:pr}
\end{figure}

\subsection{Performance on Benchmark Datasets}
We now discuss the performance on 13 benchmark datasets (${\small\sim}100$ to ${\small\sim}5,000$ data points) from \cite{UCI} repository. The metadata is summarised in the Appendix. 

As summarised in Table~\ref{tab:benchmark}, both $\rho_{\text{KD}}^{\text{sym}}$ and $\rho_{\text{KD}}^{\text{asym}}$ uniformly outperform $\rho_{\text{naive}}$ and $\rho_{\text{LC}}$ in terms of clustering accuracy. The proposed kernel diffusion density functions with asymmetric Gaussian kernel, $\rho_{\text{KD}}^{\text{asym}}$, which enjoys better local adaptivity analytically, achieves the best results on most datasets and outperforms $\rho_{\text{naive}}$ and $\rho_{\text{LC}}$ by a large margin. 
It is worth noticing that the two fast surrogates, $\rho_{\text{FKD}}^{\text{sym}}$ and $\rho_{\text{FKD}}^{\text{asym}}$, achieve comparable results with their original counterparts, $\rho_{\text{KD}}^{\text{sym}}$ and $\rho_{\text{KD}}^{\text{asym}}$. In the Appendix similar results are observed for the same set of density functions applied to DBSCAN. 

\subsection{Performance on face image Datasets}
Clustering face images according to their latent identity becomes an important application in recent years. It is challenging in the sense that face image datasets usually contain thousands of identities, corresponding to thousands of clusters. Meanwhile, the number of images for each identity (cluster) is quite different, corresponding to the variety of cluster sizes. We assess the performance of the proposed approach on two popular face image datasets: emore\_200k~\citep{cdp} and MS1M~\citep{ms1m}.

\begin{wraptable}{r}{8.0cm}
\vspace{-18pt}
    \centering
    \caption{Clustering performance on emore\_200k. BCubed precision, recall and F-score are reported.}
    \setlength{\tabcolsep}{1.3mm}{
    \begin{tabular}{l|llll}
    \toprule
    & Algorithm & Precision & Recall & $F_B$ \\
    \midrule
    \multirow{4}{*}{Baseline} 
    & $k$-means & 94.24 & 74.89 & 83.45 \\
    & HAC & \textbf{97.74} & 88.02 & 92.62 \\
    & ARO & 52.96 & 16.93 & 25.66 \\
    & CDP & 89.35 & 88.98 & 89.16 \\
    \midrule
    \multirow{6}{*}{\shortstack{Density\\-based}}
    & \small{$\rho_{\text{naive}}$} & 92.36 & 78.14 & 84.65 \\
 
    & \small{$\rho_{\text{LC}}$} & 96.15 & 86.58 & 91.11 \\
    & \small{$\rho_{\text{KD}}^{\text{sym}}$} & 95.82 & 93.24 & 94.51 \\
    & \small{$\rho_{\text{KD}}^{\text{asym}}$} & 95.48 & \underline{93.82} & \underline{94.64} \\
    & \small{$\rho_{\text{FKD}}^{\text{sym}}$} & 95.27 & 92.54 & 93.89 \\
    & \small{$\rho_{\text{FKD}}^{\text{asym}}$} & \underline{96.37} & \textbf{93.93} & \textbf{95.13} \\
    \bottomrule
    \end{tabular}}
    \label{tab:emore}
\end{wraptable}

\textbf{emore\_200k.} The dataset contains 2,577 identities with 200,000 images following the protocol in~\cite{cdp}. Results are summarized in Table~\ref{tab:emore}. The proposed diffusion density functions are applied to DPC, and compared with $k$-means, HAC~\citep{hac}, ARO~\citep{aro}, and CDP~\citep{cdp}. Again, we observe significant improvement in the proposed density functions over $\rho_{\text{naive}}$ and $\rho_{\text{LC}}$. It is also worth pointing out that, density-based clustering with proposed kernel diffusion density functions also outperform the state-of-the-arts approaches such as CDP by a large margin.

\begin{wraptable}{r}{8.0cm}
\vspace{-18pt}
    \centering
    \caption{Clustering performance on MS1M. Pairwise F-score and BCubed F-score are reported.}
    \setlength{\tabcolsep}{1.3mm}{
    \begin{tabular}{l|lll}
    \toprule
    & Algorithm & $F_{P}$ & $F_{B}$ \\
    \midrule
    \multirow{4}{*}{Unsupervised} 
    & $k$-means  & 79.21  & 81.23  \\
    & HAC & 70.63 & 70.46 \\
    & ARO & 13.60 & 17.00 \\
    & CDP & 75.02 & 78.70 \\
    \midrule
    \multirow{3}{*}{Supervised} 
    & L-GCN & 78.68 & 84.37 \\
    & LTC & 85.66 & 85.52 \\
    & GCN(V+E) & \underline{87.55} & 85.94 \\
    \midrule
    \multirow{6}{*}{Density-based}
    & \small{$\rho_{\text{naive}}$} & 78.37 & 79.35 \\
    & \small{$\rho_{\text{LC}}$} & 83.61 & 85.06 \\
    & \small{$\rho_{\text{KD}}^{\text{sym}}$} & - & - \\
    & \small{$\rho_{\text{KD}}^{\text{asym}}$} & \textbf{88.15} & \underline{87.14} \\
    & \small{$\rho_{\text{FKD}}^{\text{sym}}$} & 84.40 & 85.37 \\
    & \small{$\rho_{\text{FKD}}^{\text{asym}}$} & 87.26 & \textbf{87.41} \\
    \bottomrule
    \end{tabular}}
    \begin{tablenotes}
      \footnotesize
      \item[1] - Not available due to limited computational power.
    \end{tablenotes}
    \label{tab:ms1m}
\end{wraptable}

\textbf{MS1M.} The dataset contains 8,573 identities with around 584,000 images following the protocols in~\cite{gcn-ve}. We reported the results of clustering  performance in Table~\ref{tab:ms1m}. Precision versus Recall curves for different density functions (applied to DPC) are plotted in Figure \ref{fig2:pr}. In Table~\ref{tab:ms1m}, the proposed kernel diffusion density functions outperform $\rho_{\text{naive}}$ and $\rho_{\text{LC}}$. Note that GCN-based methods such as L-GCN~\citep{lgcn}, LTC~\citep{ltc} and GCN (V+E)~\citep{gcn-ve} achieve generally better clustering performance than unsupervised methods  due to their supervised nature. However, it is quite encouraging to see that the proposed kernel diffusion approaches, although are also unsupervised clustering methods, considerably outperform the GCN-based methods.

\begin{figure}
  \caption{Sensitivity analysis on emore\_200k and MS1M. We investigate the clustering performance by varying the following parameters: (a) Radius of $\varepsilon$-ball; (b) Number $k$ of nearest neighbors; (c) Bandwidth $h$ of Gaussian kernel.}
  \centering
  \includegraphics[width=1.0\textwidth]{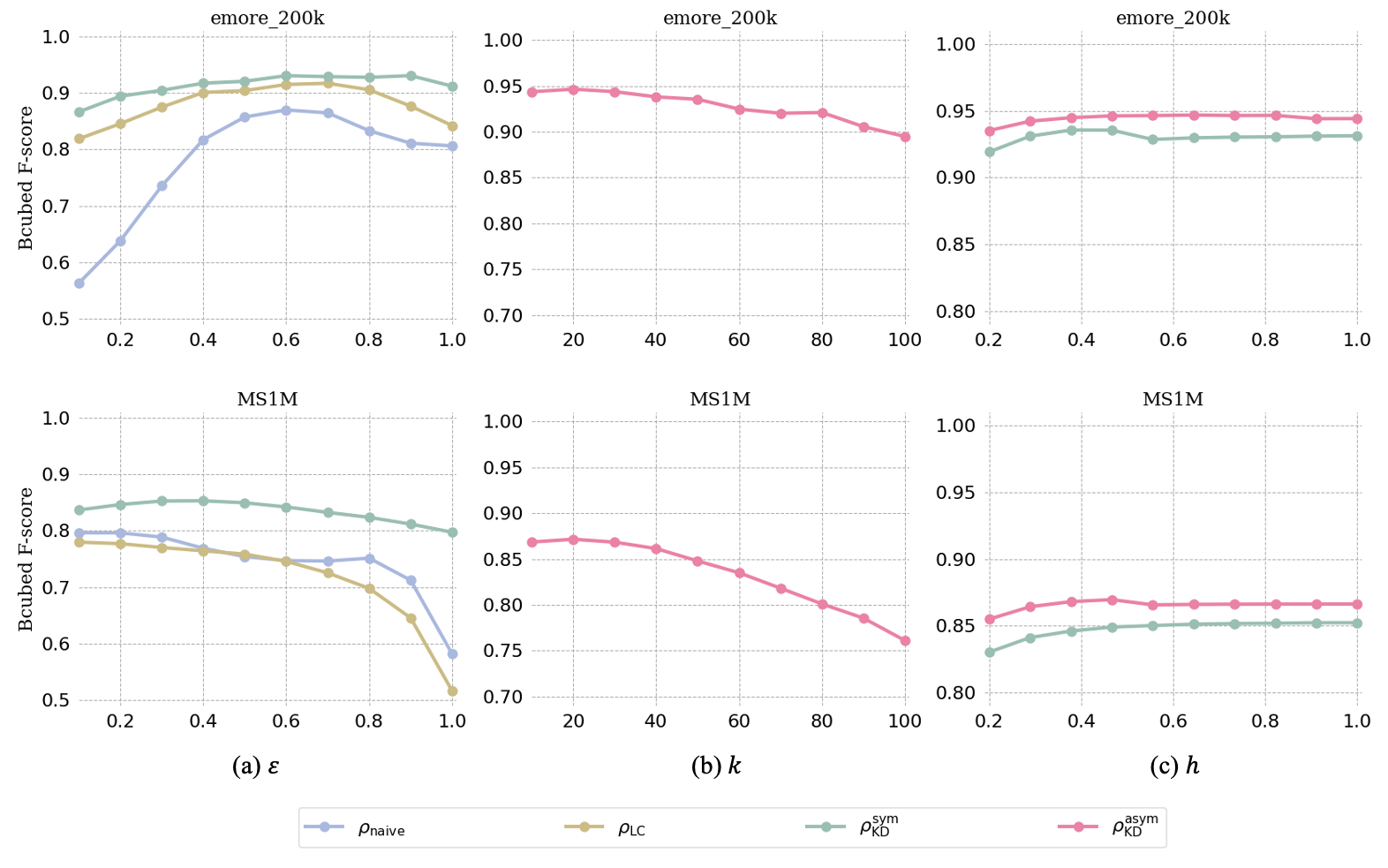}
   \label{fig:sensitivity}
\end{figure}

\subsection{Sensitivity Analysis}
Next, we examine the sensitivity of the proposed kernel diffusion density functions to hyper-parameters and compare it with $\rho_{\text{naive}}$ and $\rho_{\text{LC}}$. The results are obtained via extensive experiments on emore\_200k and MS1M, which are shown in Figure~\ref{fig:sensitivity}. We can see that the clustering performance of $\rho_{\text{KD}}^{\text{sym}}$ is much more stable than $\rho_{\text{naive}}$ and $\rho_{\text{LC}}$ when we vary the value of  $\varepsilon$. Whilst $\rho_{\text{KD}}^{\text{asym}}$ is robust to the parameter $k$, and both $\rho_{\text{KD}}^{\text{sym}}$ and $\rho_{\text{KD}}^{\text{asym}}$ are quite robust to  the parameter $h$. 

\begin{wrapfigure}{r}{7.8cm}
\vspace{-20pt}
 \caption{Running time and memory usage of the proposed methods at different sample sizes on MS1M.}
    \centering
    \includegraphics[width=0.5\textwidth]{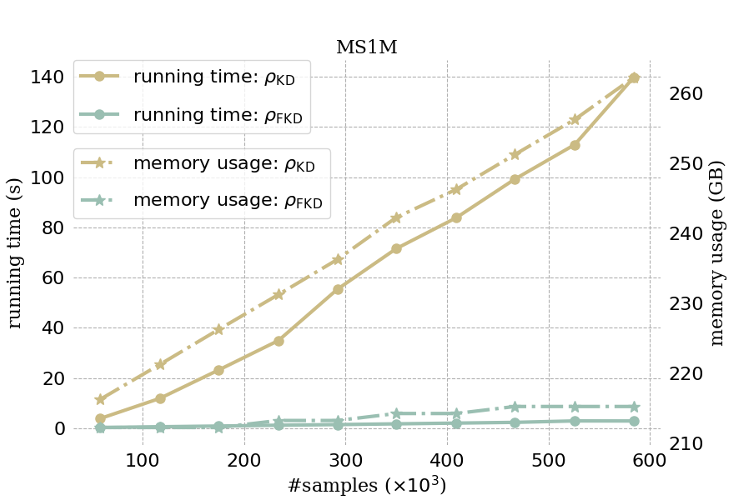}
\label{fig:time}
\end{wrapfigure}

\subsection{Computational Cost}

We carried out a series of experiments on MS1M to demonstrate the computational efficiency of the fast surrogate $\rho_{\text{FKD}}$ in terms of time and space. With a collection of subsampled data from MS1M at different percentile levels, we run both the kernel diffusion density $\rho_{\text{KD}}$ and the fast surrogate $\rho_{\text{FKD}}$. As we can observe from Figure~\ref{fig:time}, the running time and memory usage of $\rho_{\text{KD}}$ increase dramatically with the sample size. Whilst $\rho_{\text{FKD}}$ retains a very low level of computational cost. This suggests that $\rho_{\text{FKD}}$, which achieves an excellent computational efficiency, should be favored in practice.

\section{Conclusion}
Density-based clustering has a profound impact on machine learning and data mining. However, the underpinning naive density function suffers from detecting varying local features, causing extra errors in the clustering. We propose a new set of density functions based on the kernel diffusion process  to resolve this problem, which is adaptive to density regions of varying local distributional features. We demonstrate that DBSCAN and DPC adapted by the proposed approach have improved clustering performance comparing to their classic versions and other state-of-the-art methods.

\bibliography{iclr2022_conference}
\bibliographystyle{iclr2022_conference}

\appendix
\section{Appendix}
In this supplementary file, we provide technical proofs of the theoretical results in Section \ref{sec:fkd}, and present extra empirical experiments regarding our kernel diffusion approach with symmetric and asymmetric Gaussian kernels applied to DBSCAN. All the numerical experiments are carried out on a standard work station with a Intel 64-cores CPU and two Nvidia P100 GPUs.

\subsection{Proofs of Theoretical Result.}

\paragraph{Proof of Theorem \ref{theorem:3}.}
Since $\{D_{1},\cdots, D_{m}\}$ are disjoint, we have $p(x,y) = 0$ if $x$ and $y$ belong to different clusters. By the definition of matrix $P$, for each $x\in D_{j}$, we have
$$\int_{D}p(x,y)dF_{n}(y)=1,$$
which implies that
$$\int_{x\in D_{j}}\int_{y\in D_{j}}p(x,y)dF_{n}(x)dF_{n}(y)=|D_{j}|.$$
Therefore,
$$\bar{\rho}_{j}|D_{j}| = \int_{x\in D_{j}}\int_{y\in D_{j}}p(x,y)dF_{n}(x)dF_{n}(y)=|D_{j}|,$$
which implies that $\bar{\rho}_{j} = 1$ for any $j=1,,\dots,m$. \qed

Before proceeding to the proof of Theorem \ref{theorem:2}, we need following auxiliary lemma that relates the stationary distribution of a Markov chain to an arbitrary vector $g$. 




\begin{lemma}
\label{lemma:1}
Let $P$ be transition probability matrix of a finite inreducible discrete time Markov chain with $n$ states, which admits a stationary distribution, denoted by vector $\pi$. We write $e=(1,\dots,1)^T\in \mathbb{R}^n$ as the a column vector of ones. The following holds for any vector $g$ such that $g^Te\neq 0$:
\begin{itemize}
 \item[(1)] $(I-P+e g^T)$ is non-singular. 
 \item[(2)] Let $H=(I-P+e g^T)^{-1}$, then
${\pi}^T=g^TH.$
\end{itemize}

\end{lemma}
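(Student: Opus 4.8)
The plan is to prove (1) by showing that the homogeneous system $(I-P+eg^T)x=0$ admits only the trivial solution, and then to deduce (2) from a one-line verification. The only structural input required is Perron--Frobenius theory for the irreducible stochastic matrix $P$: the eigenvalue $1$ is simple, $Pe=e$, and consequently $\ker(I-P)=\mathrm{span}\{e\}$; this is combined with the defining properties $\pi^TP=\pi^T$ and $\pi^Te=1$ of the stationary distribution.

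For (1), suppose $(I-P+eg^T)x=0$. Setting the scalar $c:=g^Tx$, this reads $(I-P)x=-c\,e$. Left-multiplying by $\pi^T$ and using $\pi^T(I-P)=0$ gives $0=-c\,\pi^Te=-c$, so $c=0$ and hence $(I-P)x=0$. By irreducibility $x=\alpha e$ for some scalar $\alpha$, whence $c=g^Tx=\alpha\,g^Te$. Since $c=0$ and $g^Te\neq 0$ by hypothesis, $\alpha=0$ and $x=0$. Therefore $I-P+eg^T$ is non-singular and $H$ is well defined.

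For (2), I would simply compute, using $\pi^TP=\pi^T$ and $\pi^Te=1$,
\[
\pi^T(I-P+eg^T)=\pi^T-\pi^TP+(\pi^Te)\,g^T=g^T,
\]
and then right-multiply both sides by $H=(I-P+eg^T)^{-1}$ to obtain $\pi^T=g^TH$, as claimed.

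The only nontrivial ingredient --- and thus the main obstacle --- is the fact that $\ker(I-P)$ is one-dimensional, spanned by $e$; this is precisely where irreducibility is used. It can be invoked either from the Perron--Frobenius theorem (the spectral radius $1$ of the irreducible nonnegative matrix $P$ is a simple eigenvalue) or, equivalently, from the maximum principle stating that harmonic functions of a finite irreducible chain are constant. Everything else is elementary linear algebra, and both parts of the lemma follow once this eigenspace fact is in hand.
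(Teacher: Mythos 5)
Your proof is correct. Part (2) is verified exactly as in the paper: left-multiply $I-P+eg^T$ by $\pi^T$, use $\pi^TP=\pi^T$ and $\pi^Te=1$ to get $g^T$, then invert. The difference is in part (1): the paper disposes of the non-singularity claim by citing Theorem 3.3 of Hunter, whereas you prove it directly by showing the homogeneous system $(I-P+eg^T)x=0$ forces $x=0$ --- first extracting $g^Tx=0$ by hitting the equation with $\pi^T$, then reducing to $\ker(I-P)=\mathrm{span}\{e\}$ via Perron--Frobenius, and finally using the hypothesis $g^Te\neq 0$ to kill the remaining degree of freedom. Your version is self-contained and has the virtue of making explicit exactly where each hypothesis enters (irreducibility gives the one-dimensional kernel; $g^Te\neq 0$ rules out the direction $e$), which the paper's citation leaves opaque. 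Both arguments are sound; yours costs a few extra lines but removes the external dependency.
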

\begin{proof}
Since $\pi$ is the stationary distribution, we have $\pi^Te=1$. Applying Theorem 3.3 in \citep{hunter} yields that matrix $(I-P+e g^T)$ is non-singular. 

Next recall that $\pi^T P=\pi^T$, therefore we have
$$
\begin{aligned}
\pi^T(I-P+e g^T) & = \pi^T - \pi^TP + \pi^T eg^T\\
 & = \pi^T eg^T\\
 &= g^T,
\end{aligned}
$$
which implies $\pi^T=g^T H$.
\end{proof}





\paragraph{Proof of Theorem \ref{theorem:2}.}


Note that for for each $x\in D$, the linear reference function $\rho_{\text{FKD}}(x)=\int_D p(y,x)dF_n(y)$ is the corresponding column average of the transition matrix $P$.
We write the $i$-th column vector of $P$ as 
$$p_i=\big(p(x_1, x_i), \dots, p(x_n, x_i)\big)^T.$$ Therefore $\rho_{\text{FKD}}(x_i)=e^Tp_i/n$

Since the Markov chain induced by the kernel $k(x,y)$ is ergodic, the density $\rho(x, t)$ of the diffusion process $X_t$, will converge to the limiting stationary distribution of the Markov chain, denoted by $\pi$.

We can write the $n$-vectors of $g$ and $\pi$ in the following form: 
$$g=(g_1,\dots,g_n)^T=n\big(\rho_{\text{FKD}}(x_1), \dots, \rho_{\text{FKD}}(x_n)\big)^T \quad \mbox{and} \quad \pi=\big( \rho_{\text{KD}}(x_1),\dots, \rho_{\text{KD}}(x_n)\big)^T,$$
where $g_i=e^Tp_i$ is the $i$-th column sums of matrix $P$. As a result, we have 
$$\int_D \hat g(x) dF_n(x)=\dfrac{1}{n}e^T g=1, \quad \mbox{and}\quad n\int_D \hat \rho(x) dF_n(x)=e^T \pi=1.$$

By the definition of $g$, we know
$$
(eg^T)^2=n eg^T \quad \mbox{and} \quad e^TP= g^T . 
$$

It follows from Lemma \ref{lemma:1} that $(I-P+e g^T)$ is non-singular and ${\pi}^T=g^TH,$
where $H=(I-P+eg^T)^{-1}.$

We define $D=I+eg^T$. By simple algebra calculation, we can find $D$ is non-singular with 
$$
D^{-1}=I-\dfrac{eg^T}{n+1} .
$$

As a result, it is easy to see that that $g^TD^{-1}=\dfrac{g^T}{n+1}$ and 

$$H^{-1}=D-P=(I-PD^{-1})D.$$ 

Use the Neumann series, we have 
$$H=D^{-1}(I-PD^{-1})^{-1} = D^{-1}\sum_{i=0}^{\infty}(PD^{-1})^{i}.$$

Thus 
$$
\begin{aligned}
\pi^T-g^T/n=g^T\bigg(H-\dfrac{I}{n}\bigg)=g^T\bigg[D^{-1}\sum_{i=0}^{\infty}(PD^{-1})^{i}-\dfrac{I}{n}\bigg] 
\end{aligned}.
$$

Since we assume for any $x\in D$, $\hat g(x)<c \quad\mbox{for some } 0<c<1.$ This leads to
$$
g^Tp_j\le n c e^Tp_j= nc g_j.
$$ Therefore, let $\kappa_j$ be the $j$-th compoenent of $g^{T}PD^{-1}$, it is straightforward
$$\kappa_j \leq \frac{nc}{n+1}g_j\le cg_j.$$

This implies for every $x\in D$, 
$$
\begin{aligned}
|\rho_{\text{KD}}(x)-\rho_{\text{FKD}}(x)| &\leq {\rho_{\text{FKD}}(x)}\bigg|\dfrac{1}{n+1}\sum_{i=0}^{\infty}{c}^i-\dfrac{1}{n}\bigg|\\
 &\leq {\rho_{\text{FKD}}(x)} \bigg|\frac{1}{(n+1)(1-c)}-\dfrac{1}{n}\bigg|.
\end{aligned}
$$
Hence we have 
$\lim_{n\to \infty}|\frac{\rho_{\text{KD}}(x)}{\rho_{\text{FKD}}(x)} - 1| = 0$, which completes the proof. \qed

\subsection{Additional Experiment Results}
\paragraph{Metadata of benchmark datasets.}
The number of samples $n$, the number of clusters $c$, and feature dimension $d$ for each benchmark dataset are listed in Table~\ref{tab:meta} below.

\begin{table}[ht]
 \centering
 \caption{Metadata of benchmark datasets, includes sample size ($n$), the number of clusters ($c$), and feature dimension $d$.}
 \begin{tabular}{lccc}
 \toprule
 Dataset & $n$ & $c$ & $d$ \\
 \midrule
 Banknote & 1372 & 2 & 4 \\
 Breast-d & 569 & 2 & 30 \\
 Breast-o & 699 & 2 & 9 \\
 Control & 600 & 6 & 60 \\
 Glass & 214 & 7 & 9 \\
 Haberman & 306 & 2 & 3 \\
 Ionosphere & 351 & 2 & 34 \\
 Iris & 150 & 3 & 4 \\ 
 Libras & 360 & 15 & 90 \\
 Pageblocks & 5473 & 5 & 10 \\
 Seeds & 210 & 3 & 7 \\ 
 Segment & 210 & 7 & 19 \\
 Wine & 178 & 3 & 13 \\
 \bottomrule
 \end{tabular}
 \label{tab:meta}
\end{table}

\paragraph{Benckmark datasets with DBSCAN.} We provide the performance of the conventional density functions, $\rho_{\text{naive}}$ and $\rho_{\text{LC}}$, and the proposed kernel diffusion density functions with symmetric and asymmetric Gaussian kernels, $\rho_{\text{KD}}^{*}$ and $\rho_{\text{FKD}}^{*}$ ($* \in \{\text{sym}, \text{asym}\}$), applied to DBSCAN on 13 benchmark datasets. The results are summarised in Table~\ref{tab:benchmark_dbscan}. Similar to DPC, we see that both $\rho_{\text{KD}}^{\text{sym}}$ and $\rho_{\text{KD}}^{\text{asym}}$ uniformly outperform $\rho_{\text{naive}}$ and $\rho_{\text{LC}}$ in terms of clustering quality. $\rho_{\text{KD}}^{\text{asym}}$, which has better local adaptivity analytically, achieves the best results on most datasets and outperforms others by a significant margin in Breast-o, Control, Haberma and Seeds. 

\begin{table}[h]
    \caption{Clustering performance on benchmark datasets with different density functions applied to DBSCAN. Pairwise F-score ($F_P$) and BCube F-score ($F_B$) under optimal parameter tuning are given. The best and second-best results in each dataset are bolded and underlined, respectively.}
    \label{tab:benchmark_dbscan}
    \setlength{\tabcolsep}{1.6mm}{
    \begin{tabular}{l|cccccc|cccccc}
    \toprule
    \multirow{2}{*}{Dataset} & \multicolumn{6}{c|}{$F_P$} & \multicolumn{6}{c}{$F_B$} \\
    \cmidrule{2-13}
    & $\rho_{\text{naive}}$ & $\rho_{\text{LC}}$ & \multicolumn{1}{|c}{$\rho_{\text{KD}}^{\text{sym}}$} & $\rho_{\text{KD}}^{\text{asym}}$ & $\rho_{\text{FKD}}^{\text{sym}}$ & $\rho_{\text{FKD}}^{\text{asym}}$ 
    & $\rho_{\text{naive}}$ & $\rho_{\text{LC}}$ & \multicolumn{1}{|c}{$\rho_{\text{KD}}^{\text{sym}}$} & $\rho_{\text{KD}}^{\text{asym}}$ & $\rho_{\text{FKD}}^{\text{sym}}$ & $\rho_{\text{FKD}}^{\text{asym}}$ \\
    \midrule
    Banknote & 
    26.8 & 60.7 & \multicolumn{1}{|c}{62.0} & \textbf{66.4} & \underline{65.4} & \textbf{66.4} & 
    26.5 & 65.1 & \multicolumn{1}{|c}{60.7} & \textbf{67.4} & \underline{65.7} & \textbf{67.4} \\
    Breast-d & 
    56.7 & 63.0 & \multicolumn{1}{|c}{65.0} & \underline{66.6} & \textbf{67.2} & \underline{66.6} & 
    60.9 & 64.7 & \multicolumn{1}{|c}{66.0} & \underline{67.4} & \textbf{67.2} & \underline{67.4} \\
    Breast-o & 
    18.2 & 55.3 & \multicolumn{1}{|c}{59.2} & \textbf{70.6} & \underline{70.5} & \textbf{70.6} & 
    15.9 & 50.7 & \multicolumn{1}{|c}{52.3} & \textbf{71.3} & 70.6 & \underline{71.2} \\
    Control & 
    32.5 & 37.1 & \multicolumn{1}{|c}{51.0} & \textbf{60.3} & 48.9 & \underline{59.1} & 
    34.2 & 50.1 & \multicolumn{1}{|c}{53.7} & \textbf{66.9} & 51.6 & \underline{65.7} \\
    Glass & 
    22.0 & 29.8 & \multicolumn{1}{|c}{29.8} & \textbf{42.5} & \underline{42.0} & \textbf{42.5} & 
    25.8 & 36.9 & \multicolumn{1}{|c}{36.9} & \textbf{45.2} & \underline{43.5} & \textbf{45.2} \\
    Haberman & 
    68.6 & 72.2 & \multicolumn{1}{|c}{68.6} & \textbf{75.6} & 68.9 & \underline{75.3} & 
    69.2 & 73.1 & \multicolumn{1}{|c}{69.2} & \textbf{75.8} & 68.3 & \underline{75.7} \\
    Ionosphere & 
    25.9 & \underline{68.4} & \multicolumn{1}{|c}{68.0} & \textbf{74.2} & \textbf{74.2} & \textbf{74.2} & 
    23.8 & \underline{64.1} & \multicolumn{1}{|c}{63.7} & \textbf{72.1} & \textbf{72.1} & \textbf{72.1} \\
    Iris & 
    66.2 & 69.8 & \multicolumn{1}{|c}{66.2} & 57.2 & \textbf{73.7} & \underline{73.3} & 
    67.2 & 76.6 & \multicolumn{1}{|c}{67.2} & 67.0 & \textbf{79.4} & \underline{79.0} \\
    Libras & 
    \underline{18.1} & 12.0 & \multicolumn{1}{|c}{15.6} & 13.8 & \textbf{20.2} & 13.5 & 
    31.1 & 16.5 & \multicolumn{1}{|c}{\underline{42.1}} & \textbf{45.5} & 32.9 & 37.7 \\
    Pageblocks & 
    48.4 & 89.2 & \multicolumn{1}{|c}{\underline{90.0}} & \textbf{90.1} & 89.9 & \textbf{90.1} & 
    45.2 & 85.5 & \multicolumn{1}{|c}{\textbf{89.7}} & \underline{89.5} & \textbf{89.7} & \underline{89.5} \\
    Seeds & 
    57.8 & 47.6 & \multicolumn{1}{|c}{57.8} & \textbf{63.2} & 22.4 & \underline{62.4} & 
    59.2 & 53.0 & \multicolumn{1}{|c}{59.2} & \textbf{70.0} & 24.4 & \underline{69.2} \\ 
    Segment & 
    18.5 & \underline{47.9} & \multicolumn{1}{|c}{\textbf{54.8}} & 30.8 & 41.4 & 30.8 & 
    22.5 & 55.2 & \multicolumn{1}{|c}{\textbf{66.6}} & 53.6 & \underline{59.9} & 53.6 \\
    Wine & 
    40.5 & 40.5 & \multicolumn{1}{|c}{40.5} & \underline{49.5} & \textbf{50.0} & \underline{49.5} & 
    45.7 & 45.7 & \multicolumn{1}{|c}{45.7} & \textbf{52.3} & \underline{51.3} & \textbf{52.3} \\
    \bottomrule
    \end{tabular}}
\end{table}

\end{document}